\newtheorem{proposition}{Proposition}
\newcommand{\bb}[1]{\mathbf{#1}}
\newcommand{\bxi}{\bb{x}_\text{img}}
\newcommand{\bxt}{\bb{x}_\text{txt}}
\newcommand{\bzth}{\hat{\mathbf{z}}_\text{txt}}
\newcommand{\bzthi}{\hat{\mathbf{z}}^i_\text{txt}}
\newcommand{\bzthj}{\hat{\mathbf{z}}^j_\text{txt}}
\newcommand{\bzthk}{\hat{\mathbf{z}}^k_\text{txt}}
\newcommand{\tbzth}{\hat{\mathbf{z}}^T_\text{txt}}
\newcommand{\bzih}{\hat{\mathbf{z}}_\text{img}}
\newcommand{\bep}{\bm{\epsilon}}
\newcommand{\bz}{\bb{z}}
\newcommand{\bzero}{\bb{0}}
\newcommand{\bI}{\bb{I}}
\newcommand{\bzt}{\bb{z_\text{txt}}}
\newcommand{\bzi}{\bb{z_\text{img}}}
\newcommand{\bzii}{\bb{z}^i_\text{img}}
\newcommand{\bzij}{\bb{z}^j_\text{img}}
\newcommand{\bzik}{\bb{z}^k_\text{img}}
\newcommand{\tbzi}{\bb{z}^T_\text{img}} 
\newcommand{\bzpt}{\bb{z'_\text{txt}}}
\newcommand{\bgi}{\bb{g}_\text{img}}
\crefname{section}{Sec.}{Secs.}
\Crefname{section}{Section}{Sections}
\Crefname{table}{Table}{Tables}
\crefname{table}{Tab.}{Tabs.}
\begin{document}

%
\title{Variational Distribution Learning for Unsupervised Text-to-Image Generation} 
\author{
Minsoo Kang\textsuperscript{\normalfont 1}\thanks{This work was partly done during an internship at Kakao Brain.} \qquad Doyup Lee\textsuperscript{\normalfont 3} \qquad
Jiseob Kim\textsuperscript{\normalfont 3} \qquad Saehoon Kim\textsuperscript{\normalfont 3} \qquad Bohyung Han\textsuperscript{\normalfont 1,2} \\
{\hspace{-0.8cm} \textsuperscript{1}ECE \& \textsuperscript{2}IPAI, Seoul National University}~~~~~~\textsuperscript{3}Kakao Brain \\
 {\tt\small \{kminsoo, bhhan\}@snu.ac.kr \quad \{doyup.lee, jiseob.kim, shkim\}@kakaobrain.com}
}
\maketitle


\begin{abstract}
We propose a text-to-image generation algorithm based on deep neural networks when text captions for images are unavailable during training. 
In this work, instead of simply generating pseudo-ground-truth sentences of training images using existing image captioning methods, we employ a pretrained CLIP model, which is capable of properly aligning embeddings of images and corresponding texts in a joint space and, consequently, works well on zero-shot recognition tasks.    
We optimize a text-to-image generation model by maximizing the data log-likelihood conditioned on pairs of image-text CLIP embeddings. 
To better align data in the two domains, we employ a principled way based on a variational inference, which efficiently estimates an approximate posterior of the hidden text embedding given an image and its CLIP feature.
Experimental results validate that the proposed framework outperforms existing approaches by large margins under unsupervised and semi-supervised text-to-image generation settings.
\end{abstract}


\section{Introduction}
\label{sec:introduction}
Recent advances in text-to-image (T2I) generation techniques~\cite{dalle, glide, cogview, VQGAN, rqvae, LDM, dalle2, imagen, parti, draftNrevise} have shown promising results by employing generative adversarial networks~\cite{goodfellow2014generative}, autoregressive models~\cite{van2016pixel}, or diffusion models~\cite{ddpm, songscore} to synthesize images based on their text captions.
However, these approaches require a paired dataset that consists of images and their corresponding text captions, and, consequently, incur significant annotation costs, especially for labeling image captions. 
To alleviate this limitation, unsupervised learning methods for T2I generation have recently drawn attention to the computer vision community, where the models learn to generate images without paired text captions.

Existing T2I models~\cite{wang2022clip, zhou2022towards, kNNDiff, rdm} based on unsupervised learning exploit Contrastive Language-Image Pretraining (CLIP)~\cite{radford2021learning} to sidestep the absence of text captions during training.
Specifically, after a text embedding is estimated using a given image embedding, the T2I model is trained to synthesize an image conditioned on the estimated text embedding.
However, although image and text embeddings extracted by CLIP are not accurately aligned, existing approaches assume that the distinction is ignorable~\cite{wang2022clip} or simple to recover by just adding Gaussian noises~\cite{zhou2022towards} without considering the underlying structure of text embeddings.
Thus, those algorithms may suffer from large discrepancies between true and estimated text embeddings at both training and testing.
To tackle the challenge, we propose a variational distribution learning technique for unsupervised T2I generation, where the lower-bound of the data log-likelihood is maximized in a principled way.
Specifically, we first regard a text embedding as a hidden random variable while an image and its CLIP embedding are observable random variables.
Then, we decompose the variational lower-bound into three parts: 1) the similarity between the text embedding prior and posterior, 2) the log-likelihood of the image embedding given the text embedding, 3) the log-likelihood of the image given the image and text embeddings in the trained T2I model.
Since the lower-bound formulation enforces the matching between the prior and posterior distributions of text embedding, our method achieves a more accurate estimation of the embedding and reduces the discrepancy between the true and estimated embeddings.

For the optimization of the variational objective, we employ a two-stage training strategy for T2I models.
In the first stage, we learn an encode-decoder architecture that takes the image embedding as an input and the estimated text embedding as a latent bottleneck.
Then, our network estimates two conditional distributions of CLIP embeddings, one for the variational distribution of the text embedding given the image embedding and the other for the model distribution of the image embedding given the text embedding.
The parameters of the two distributions are obtained from the first two terms in the variational lower-bound objective.
Note that we relax the Kullback-Leibler (KL) divergence term in the training objective of the first stage to an adversarial training loss, specifically, the Jensen-Shannon divergence. 
Since the KL divergence is only tractable for a confined family of distributions, this relaxation allows more flexible choices for the conditional and the prior distributions.
In the second stage, a T2I model learns the conditional distribution of the images given the estimated text embeddings and the image features. 
Altogether, the proposed method achieves outstanding performance on widely adopted datasets~\cite{lin2014microsoft, cc3m}.
The main contributions of our work are summarized below: 
\begin{itemize}
 \item We propose a principled approach for unsupervised and semi-supervised text-to-image generation tasks based on a variational inference technique.
 \item We theoretically show that our method considers the underlying structure of text embeddings, which can eventually lead to better generalization performance. 
 \item We empirically confirm that the proposed algorithm outperforms the existing methods by large margins.
\end{itemize}

The rest of our paper is organized as follows. 
Section~\ref{sec:related} overviews the related work about unsupervised training methods in text-to-image generation. 
Section~\ref{sec:proposed algorithm} describes the main idea of our approach while Sections~\ref{sec:two_stage_framework} and~\ref{sub:training_t2i} discuss the procedures of the first and second training stages, respectively.
The experimental results are presented in Section~\ref{sec:experiments}, and we finally conclude this paper in Section~\ref{sec:conclusion}.

\section{Related Work}
\label{sec:related}
Text-to-image generative models have shown astonishing performance via learning with large-scale datasets composed of image-text pairs.
Existing algorithms often represent each image with a sequence of discrete tokens and learn autoregressive~\cite{dalle,VQGAN,rqvae,cogview,nuwa} or bidirectional~\cite{draftNrevise} transformers to generate high-resolution images given text inputs.
Recently, the introduction of diffusion models~\cite{ddpm,cascadedDiff,adm,clsfree} has paved the way to learn large-scale T2I models and generate high-quality images conditioned on text.

For training T2I models without captions describing images, previous approaches~\cite{wang2022clip,zhou2022towards,kNNDiff} typically exploit the pretrained CLIP~\cite{radford2021learning} to approximate missing text captions.
Specifically, CLIP-GEN~\cite{wang2022clip} assumes that a CLIP image embedding is perfectly aligned with the corresponding text embedding, and utilizes the image embedding as a proxy of its text embedding for T2I generation.
On the other hand, LAFITE~\cite{zhou2022towards} adds a Gaussian random noise to the image embedding for estimating the unknown true text embedding.
However, these algorithms fail to consider the underlying structure of the text embeddings, which eventually results in the imprecise approximation of text embedding.
Retrieval-based approaches~\cite{kNNDiff,rdm} employ image features similar to CLIP-GEN~\cite{wang2022clip} for text conditions while $k$-nearest image embeddings are additionally utilized for the construction of the conditions; note that these approaches are orthogonal to our method and can be combined with the proposed method to further enhance generation performance.

On the other hand, we propose a principled framework relying on a variational inference to effectively reduce the discrepancy between the true embedding employed during inference and the approximated one drawn by the variational distribution used for training.

%
\section{Main Framework}
\label{sec:proposed algorithm}
This section presents the existing unsupervised training approaches for T2I models.
Then, we formulate our variational inference framework for unsupervised training that effectively reduces the discrepancy between true and estimated text embeddings.
Figure~\ref{fig:graphical_model} illustrates the graphical model of the data generating process in our approach.
\begin{figure}[t]
\centering
\includegraphics[width=0.96\linewidth]{./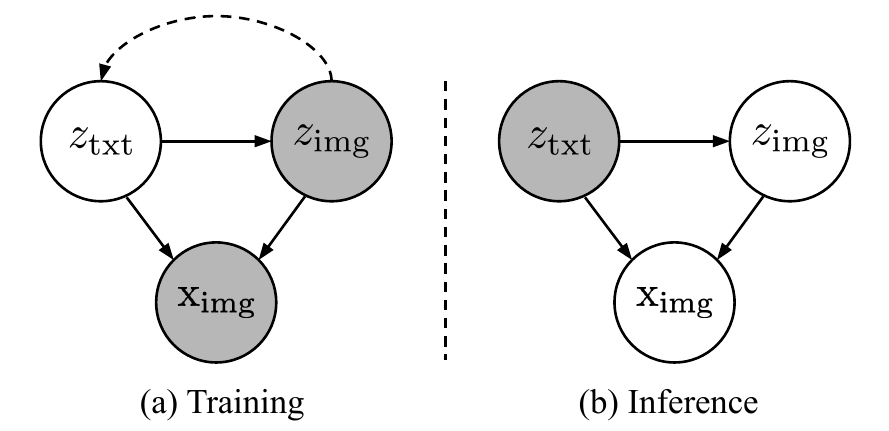}
\caption{Probabilistic graphical models of the proposed method, where shaded nodes represent observable random variables while unshaded nodes represent hidden random variables.}
\label{fig:graphical_model}
\end{figure}

\subsection{Unsupervised Training of T2I Models}
Unsupervised training of T2I models learns to generate images without textual annotations while the model generates high-quality images based on given captions at inference time.
Conventional supervised training of T2I models exploits both an image, $\bxi$, and its textual caption, $\bxt$, to estimate a conditional distribution $p(\bxi|\bxt)$.
However, unsupervised training assumes that the text condition $\bxt$ is unavailable during training.
Existing approaches estimate a latent text representation $\bzt$ and formulate the T2I model as a task to derive the conditional distribution $p(\bxi | \bzt)$.
In other words, an encoder $E(\cdot)$ approximates the text condition, \ie, $\bzth=E(\bxi)$, during training and $\bzth$ is replaced with a representation of a given text sentence $\bxt$ at inference time.

The previous studies commonly incorporate the pretrained CLIP model~\cite{radford2021learning}, which consists of two separated encoders for images $f_\text{img}(\cdot)$ and texts $f_\text{txt}(\cdot)$. 
The vision-language model is learned to make a pair of image and text embeddings, denoted respectively by $f_\text{img}(\bxi)$ and $f_\text{txt}(\bxt)$, have a high cosine similarity.
CLIP-GEN~\cite{wang2022clip} and retrieval-based models~\cite{kNNDiff,rdm} approximate the text condition using the CLIP image embedding, $\bzth \approx f_\text{img}(\bxi)$, during training.
Since the two embeddings are not exactly aligned, LAFITE~\cite{zhou2022towards} additionally adds a perturbation to the CLIP image embedding and approximates the text condition $\bzt$ as
\begin{align}
\bzth \approx \text{Normalize} ( f_\text{img}(\bxi) + \xi \| f_\text{img}(\bxi) \| \bep / \| \bep \| ),
\label{eq:LAFITE}
\end{align} 
where $\| \cdot \|$ and $\text{Normalize}(\cdot)$ are the Euclidean norm and an operator to divide the input by its magnitude while $\xi$ is a hyperparameter and $\bep$ is a random noise drawn from the Gaussian distribution, $\mathcal{N}(\bzero, \bI)$.
Although the previous methods are proposed to approximate the absent text condition, we observe that they are insufficient to reduce the gap between the training and the inference environments of T2I models. 

\subsection{Variational Inference for Training T2I Models}
We aim to formulate a variational inference framework for unsupervised training of T2I models to reduce the discrepancy between training and inference while improving the performance of T2I models.
Given CLIP image and text embeddings, $\bzi=f_\text{img}(\bxi)$ and $\bzt=f_\text{txt}(\bxt)$, our T2I generation model is defined as 
\begin{equation} \label{eq:t2i}
    p_{\theta^{\bxi}}(\bxi | \bzi, \bzt),
\end{equation}
where $\theta^{\bxi}$ is a set of parameters in the T2I model.
Note that the text representation $\bzt$ is unavailable during training while the image embedding $\bzi$ is absent for inference as illustrated in Figure~\ref{fig:graphical_model}.
Thus, the challenges lie in the precise approximation of the text embedding $\bzth$ based on $\bzi$ during training while approximating $\bzih$ given $\bzt$ at inference.

Our approach naturally maximizes the marginal log-likelihood $\log p_\theta(\bxi)$ with respect to $\theta$, a set of parameters of our data generating process, without the observation of the text embedding $\bzt$ during training.
The log-likelihood $\log p_\theta(\bxi)$ is computed by marginalizing out $\bzt$ as
\begin{align}
 \log p_\theta(\bxi) &= \log p_\theta(\bxi, \bzi)  \nonumber \\
 &= \log \int p_\theta(\bxi, \bzi, \bzt) d \bzt. 
\label{eq:mar}
\end{align} 
Unfortunately, it is intractable to compute the marginal log-likelihood either directly or through the estimation of the posterior, $p_\theta(\bzt | \bxi, \bzi)$.
Hence, we employ a variational inference technique that approximates the true posterior $p_\theta(\bzt |\bxi, \bzi)$ using a variational distribution $q_{\phi^{\bzt}}(\bzt | \bzi)$ parametrized by $\phi^{\bzt}$ under the assumption of $q_{\phi^\bzt}(\bzt | \bxi, \bzi)=q_{\phi^\bzt}(\bzt | \bzi)$.

To learn the true posterior distribution via the variational posterior, we equivalently maximize the lower bound on the log-likelihood, which is given by
\begin{align} 
\label{eq:vi} 
 &\log p_\theta(\bxi, \bzi) \nonumber\\
 &\geq \mathbb{E}_{q_{\phi^{\bzt}}( \bzt | \bzi)} \left[ \log \frac{p_\theta(\bxi, \bzi, \bzt)}{q_{\phi^{\bzt}}(\bzt | \bzi)} \right],
\end{align}
where the lower bound in~\eqref{eq:vi} is defined as $\mathcal{L}_\text{ELBO}$.
We achieve the tight lower bound when the variational distribution is exactly same as the true posterior.

\paragraph{ELBO}
By factorizing $p_\theta(\bxi, \bzi, \bzt)$ into the product of $p_{\theta^{\bxi}}(\bxi | \bzi, \bzt)$,  $p_{\theta^{\bzi}}(\bzi | \bzt)$, and $p(\bzt)$ according to the probabilistic graphical model in Figure~\ref{fig:graphical_model}, $\mathcal{L}_\text{ELBO}$ is decomposed as follows:
%
%
\begin{align} \nonumber
\mathcal{L}_\text{ELBO} &= -D_\text{KL} (q_{\phi^{\bzt}}( \bzt | \bzi) || p(\bzt) ) \\ \nonumber
&+ \mathbb{E}_{q_{\phi^{\bzt}}( \bzt | \bzi)}[\log p_{\theta^\bzi} (\bzi | \bzt) ] \\ 
&+ \mathbb{E}_{q_{\phi^{\bzt}}( \bzt | \bzi)} [ \log p_{\theta^{\bxi}}(\bxi | \bzi, \bzt)]. \label{eq:vi_detail}
\end{align}
We reformulate our task as the maximization of $\mathcal{L}_\text{ELBO}$, which involves the optimization of the T2I model in \eqref{eq:t2i} under the unsupervised setting.
Specifically, the maximization of the first term in \eqref{eq:vi_detail} encourages the estimated sample $\bzth$ drawn from $q_{\phi^{\bzt}}( \bzt | \bzi)$ to lie on the structure of $\bzt$ by minimizing the KL divergence.
Also, the second term learns $p_{\theta^\bzi}(\bzi | \bzt)$ to reconstruct $\bzi$ based on the given $\bzt$ from $q_{\phi^{\bzt}}( \bzt | \bzi)$, where $p_{\theta^\bzi}(\bzi | \bzt)$ is employed to estimate $\bzi$ in \eqref{eq:t2i} at inference.
Finally, the third term implies training T2I models without text captions, where $q_{\phi^\bzt}(\bzt | \bzi)$ estimates the absent text embedding $\bzt$ based on $\bzi$.
The details about how to optimize the model parameters are discussed in the next section.

To maximize $\mathcal{L}_\text{ELBO}$ of \eqref{eq:vi_detail} in practice, we employ the following two-stage optimization procedure:
\begin{enumerate}
\item \noindent{\bf Fix $\{ \theta^{\bxi} \} $ and optimize $ \theta^\bzi$ and $\phi^{\bzt}$.}

With the parameter $\{ \theta^{\bxi} \} $ fixed, maximize the objective in \eqref{eq:vi_detail} with respect to $ \theta^\bzi $ and $\phi^{\bzt}$.

\item \noindent{\bf Fix $\{ \theta^\bzi, \phi^{\bzt} \}$ and optimize $\theta^{\bxi}$.}

With the parameters $\{ \theta^\bzi, \phi^{\bzt} \}$ fixed, maximize the objective in \eqref{eq:vi_detail} with respect to $\theta^{\bxi}$.

\end{enumerate} 
In other words, we first train the generative and variational parameters to approximate $\bzi$ and $\bzt$, and then train a T2I model under the unsupervised setting.
We present the detailed description of the first and second stages in Section~\ref{sec:two_stage_framework} and~\ref{sub:training_t2i}, respectively.


\section{Variational Distribution Learning}
\label{sec:two_stage_framework}
This section describes the first stage of our algorithm based on variational distribution learning (VDL). 
VDL approximates the posterior of text features given image embeddings, $q_{\phi^{\bzt}}(\bzt | \bzi)$, via an adversarial training and reconstructs the image embeddings given the text features using $p_{\theta^\bzi} (\bzi | \bzt)$.

\subsection{Sampling}
\label{sub:sampling}
We first discuss the sampling procedure in VDL that involves the sampling of $\bzth$ for training to estimate the text embeddings and the sampling of $\bzih$ for inference to obtain the image embeddings.
Also, we describe the sampling process of the text prior $\bzpt$, which is only required for training.%

\subsubsection{Generating Samples for Text Embedding}
\label{subsub:variational_distribution_samples}
Let $G(\cdot)$ with parameters of $\phi^\bzt$ be an encoder implemented with a multilayer perceptron, 
Using the encoder, we draw a sample $\bzth$ from an implicitly defined variational distribution $q_{\phi^{\bzt}}(\bzt | \bzi)$ as follows:
\begin{align}
\bzth &\sim S_\text{VDL}(z_\text{img}, G, r)  \nonumber \\
 &:= \text{Normalize} \left( \bzi + r \cdot \frac{ G(\bzi)}{\| G(\bzi) \|} \right), 
\label{eq:fake_txt}
\end{align}
where $S_\text{VDL}(\cdot, \cdot, \cdot)$ denotes our sampling strategy and $r \geq 0$ is a hyperparameter. 
The following proposition validates the high cosine similarity between the estimated text sample $\bzth$ and the image feature $\bzi$ that has a similar representation with the unknown text embedding $\bzt$.
\begin{proposition}
\vspace{0.1cm}
Let $\bzth$ be a sample obtained by the proposed sampling strategy $S_\text{VDL}$ defined in \eqref{eq:fake_txt} based on $\bzi$. 
Then, the following inequality always holds for $G(\cdot;\phi^\bzt)$ with arbitrary values of its parameter $\phi^\bzt$:
\begin{align}
\tbzth  \bzi \geq \sqrt{1-r^2}, \nonumber
\end{align}
where $r<1$.
\label{propostion:1}
\end{proposition}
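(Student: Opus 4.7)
The plan is to reduce the claimed bound to a one-variable algebraic inequality by exploiting unit-norm structure, and then to verify that inequality by direct expansion.

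First, I would make the normalization assumptions explicit: CLIP image embeddings satisfy $\|\bzi\| = 1$, and by construction $u := G(\bzi)/\|G(\bzi)\|$ is a unit vector (whenever $G(\bzi) \neq 0$; otherwise the formula is typically read by convention, or one works on the generic event). Then from the definition of $S_\text{VDL}$ in \eqref{eq:fake_txt},
\begin{align}
\tbzth \bzi
= \frac{(\bzi + r u)^\top \bzi}{\|\bzi + r u\|}
= \frac{1 + r c}{\sqrt{1 + 2rc + r^2}}, \nonumber
\end{align}
where I introduce the single scalar $c := u^\top \bzi \in [-1,1]$.

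Second, I would observe that the sign of the numerator is controlled: since $|rc| \leq r < 1$, we have $1 + rc > 0$, so the claim $\tbzth\bzi \geq \sqrt{1-r^2}$ is equivalent, after squaring both positive sides, to
\begin{align}
(1+rc)^2 \;\geq\; (1-r^2)\,(1 + 2rc + r^2). \nonumber
\end{align}
Expanding both sides and subtracting, the difference collapses to $r^2(c+r)^2 \geq 0$, which is manifestly nonnegative. This step is the heart of the argument; the algebra is short and elementary, and it also reveals that equality is attained precisely when $c = -r$, i.e.\ when $u^\top \bzi = -r$.

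Honestly, there is no serious obstacle here: the only real point of care is to justify squaring (which requires $1+rc>0$, handled by $r<1$) and to make explicit that $\bzi$ and $G(\bzi)/\|G(\bzi)\|$ are unit vectors so that the ratio has the clean closed form above. The proposition then follows for every choice of the parameters $\phi^{\bzt}$, since the bound depends on $G$ only through $c \in [-1,1]$ and the inequality holds uniformly in $c$.
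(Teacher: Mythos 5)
Your proof is correct and takes essentially the same route as the paper's: both reduce the claim to the identical one-variable expression $(1+rc)/\sqrt{1+2rc+r^2}$ with $c$ the cosine between $\bzi$ and the normalized $G(\bzi)$. The only difference is the final step --- the paper rewrites the numerator as $\tfrac{1}{2}\bigl[(1+2rc+r^2)+(1-r^2)\bigr]$ and applies the AM--GM inequality, whereas you square both sides (correctly justifying positivity via $r<1$) and obtain the sum-of-squares identity $r^2(c+r)^2\geq 0$; both yield the same equality condition $c=-r$.
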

\noindent 
We provide the proof of this proposition in the supplementary document.
Proposition~\ref{propostion:1} implies that a sample $\bzth$ drawn by~\eqref{eq:fake_txt} is guaranteed to be close to $\bzi$ by setting $r$ to a sufficiently small number in $[0,1]$. 
The proposed sampling strategy also results in the acceleration of the optimization procedure by constraining the search space of the variational distribution.
Thanks to the optimization procedure, $\bzth$ is expected to achieve high cosine similarity with $\bzt$, which is also empirically confirmed in Section~\ref{sec:experiments}.

\subsubsection{Generating Samples for Image Embedding}
\label{subsub:image_embedding_reconstruction_samples}
During training, we reconstruct the image embedding samples to compute the second expectation in~\eqref{eq:vi_detail} while we obtain the embeddings to synthesize the images during inference.
Similar to the sampling process of $q(\bzt | \bzi)$, we reconstruct the image embedding based on the text feature $\bzt$  via the sampling process of $p(\bzi | \bzt)$ using a decoder $F(\cdot)$ with $\theta^\bzi$, which is given by   
\begin{align}
\bzih &\sim S_\text{VDL} (\bzt, F, r) \nonumber \\ 
&=\text{Normalize} \left( \bzt + r \cdot \frac{ F(\bzt)}{\| F(\bzt) \|} \right), 
\label{eq:fake_img}
\end{align} 
where $\bzih$ is a reconstructed image embedding. 
According to Proposition~\ref{propostion:1}, $\bzih$ is also expected to have a high cosine similarity with $\bzt$.

\subsubsection{Generating Prior Samples for Text Embedding}
\label{subsub:prior_samples}
For the first stage of training, we require samples from the prior to minimize the difference between the variational and text prior distributions with respect to the parameters in the encoder $G(\cdot)$.
Refer to Section~\ref{sub:training} for the detailed optimization procedure.
We draw a sample $\bzpt$ from the prior distribution using a text corpus, which is available online.
Note that we do not employ an additional dataset consisting of image-text pairs, but randomly choose a sentence $\bxt'$ from the text corpus.
Using the sampled sentence, the representation of the prior sample is obtained by
\begin{align}
\bzpt = f_\text{txt}(\bxt'). 
\end{align}
In our framework, the pretrained CLIP encoder is fixed to reduce the computational burden. 
\begin{figure*}[t]
\centering
\begin{subfigure}[t]{0.43\linewidth}
\centering
\includegraphics[width=0.96\linewidth]{./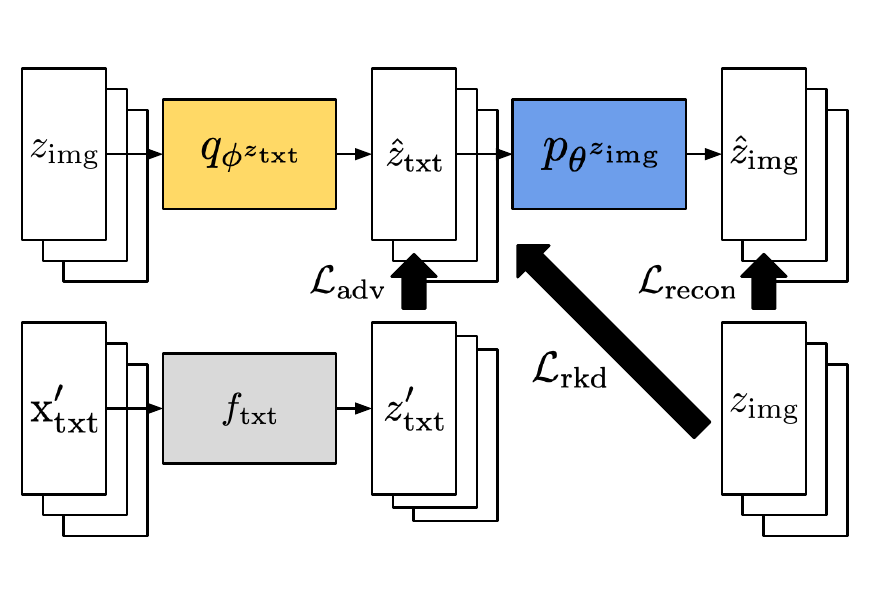}
\caption{First Stage Training}
\label{fig:framework_a}
\end{subfigure}
~ 
\begin{subfigure}[t]{0.26\linewidth}
\centering
\includegraphics[width=0.96\linewidth]{./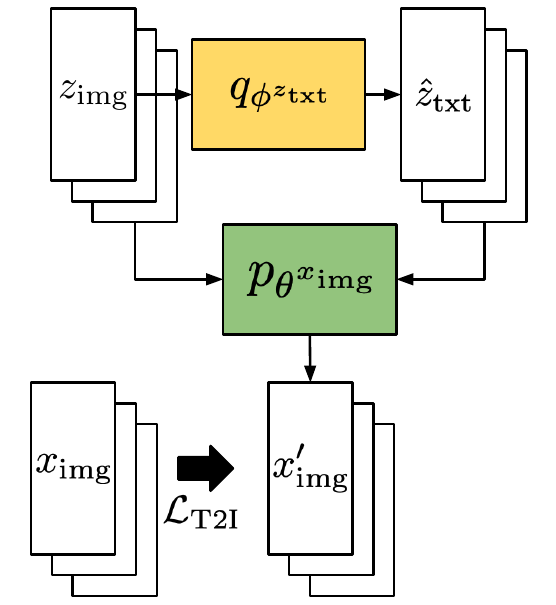}
\caption{Second Stage Training}
\label{fig:framework_b}
\end{subfigure}
~ 
\begin{subfigure}[t]{0.26\linewidth}
\centering
\includegraphics[width=0.96\linewidth]{./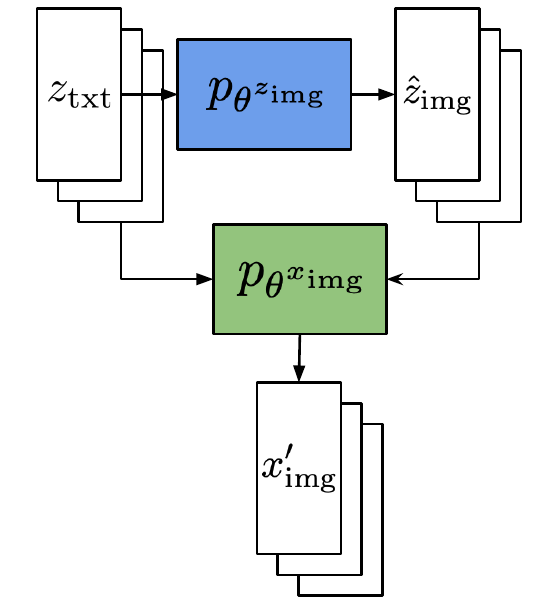}
\caption{Inference}
\label{fig:framework_c}
\end{subfigure}
\caption{Overview of the proposed method. We illustrate the first and second stage training procedures and then depict the inference step.} 
\label{fig:framework}
\end{figure*}

\subsection{Training}
\label{sub:training}
\subsubsection{Robust Objective}
\label{subsub:robust_objective}
The optimization of \eqref{eq:vi_detail} with respect to $ \theta^\bzi $ and $\phi^{\bzt}$ is equivalent to solve the following minimization problem because $\log p_{\theta^{\bxi}}(\bxi | \bzi, \bzt)$ is irrelevant.
Hence, the objective function for the first stage is given by
\begin{align}
\min_{\theta^\bzi, \phi^\bzt} \, & D_\text{KL} (q_{\phi^{\bzt}}( \bzt | \bzi) || p (\bzt) )\nonumber \\ 
 -&\mathbb{E}_{q_{\phi^{\bzt}}( \bzt | \bzi)} [\log p_{\theta^\bzi} (\bzi | \bzt) ].
\label{eq:stage1_objective}
\end{align}
However, the KL divergence is generally intractable unless the two distributions belong to specific families of probability distributions, \eg, Gaussian distribution.  
Such restricted distributions are different from the true distributions of the CLIP features, which only have non-zero densities at the surface of the unit hypersphere in the feature space.
Therefore, the use of the restricted distributions for modeling the variational and prior distributions would cause high approximation errors~\cite{cremer2018inference}.
To reduce the distribution gap, the von Mises-Fisher distribution can be employed but this probability density involves the Bessel function, which also leads to the intractable KL divergence.

To address the issue, we alternatively minimize the difference between the two distributions using the Jensen-Shannon (JS) divergence that also enforces the two distributions to become identical via its minimization.
Contrary to the KL divergence, the JS divergence is always bounded and free from density assumption, which allows us to use implicit generative networks for flexible modeling of the two distributions.
We observe that replacing the KL divergence with the JS divergence is more effective, which will be empirically validated in Section~\ref{subsubsec:JSD}.
By using the property discussed in~\cite{goodfellow2014generative, nowozin2016f}, the objective is reformulated as the following minimax game, which is given by
\begin{align}
\label{eq:stage1_objective_adv}
\min_{\theta^\bzi, \phi^\bzt} \hspace{-.5mm} \max_{\rho^D}  ~~ & \mathbb{E}_{p (\bzt)}[\log D(\bzt)] \nonumber \\
+ & \mathbb{E}_{q_{\phi^{\bzt}}( \bzt | \bzi)} \left[ \log (1 - D(\bzt)) \right] \nonumber \\
- & \mathbb{E}_{q_{\phi^{\bzt}}( \bzt | \bzi)} \left[ \log p_{\theta^{\bzi}}(\bzi | \bzt) \right], 
\end{align}
where $D(\cdot)$ is a discriminator parametrized by $\rho^D$.
The last term of~\eqref{eq:stage1_objective_adv} denoted by $\mathcal{L}_{\text{recon}}$ is expressed as
\begin{align}
\mathcal{L}_{\text{recon}} :&= \mathbb{E}_{q_{\phi^{\bzt}}( \bzt | \bzi)} \left[ - \log p_{\theta^{\bzi}}(\bzi | \bzt) \right] \nonumber \\ 
&=\frac{1}{2\sigma^2} \mathbb{E}_{q_{\phi^{\bzt}}( \bzt | \bzi)} \left[ \| \bzi - \bzih)  \|^2 \right], 
\label{eq:recon}
\end{align}
where we employ the $\ell_2$ loss for reducing the negative log-likelihood with a balancing factor $\sigma$.

\begin{table*}[t!]
\caption{Results of unsupervised text-to-image generation on the MS-COCO~\cite{lin2014microsoft} and Conceptual Captions 3M~\cite{cc3m} datasets using StyleGAN2~\cite{karras2020analyzing}.
    Captioning indicates a text-to-image generation baseline method relying on a state-of-the-art image captioning algorithm~\cite{zhang2021vinvl}, where the results of the baselines are retrieved from~\cite{zhou2022towards}. 
    Methods with asterisks * report the results of our reproduction.
    A bold-faced number denotes the best performance in each column while `--' indicates that the number is unavailable.
    }    
\scalebox{0.90}{
    \begin{tabular}{ccccccc}
        \toprule
      T2I Model & Dataset & Method & IS ($\uparrow$) & FID ($\downarrow$) & $\text{Sim}_\text{txt}$ ($\uparrow$) & $\text{Sim}_\text{img}$ ($\uparrow$)  \\ %
        \midrule
        \multirow{8}{*}{StyleGAN2~\cite{karras2020analyzing}}  & \multirow{4}{*}{MS-COCO~\cite{lin2014microsoft}}  & Captioning~\cite{zhang2021vinvl} & $15.83$ & $56.36$ & - & - \\ %
        & & CLIP-GEN*~\cite{wang2022clip} & $16.94$ & $58.63$ & 0.3042 & -\\ %
        & & $\text{LAFITE}$~\cite{zhou2022towards} & $27.20$ & $18.04$ & 0.0965 &- \\ 
        & & VDL (Ours) & $\textbf{30.30}$ & $\textbf{13.22}$ & \textbf{0.6104} & \textbf{0.7655} \\
        \cmidrule{2-7}
        & \multirow{3}{*}{Conceptual Captions 3M~\cite{cc3m}} &  CLIP-GEN*~\cite{wang2022clip} & $7.88$ & $84.16$ & 0.2896 &-\\ 
        & & $\text{LAFITE}*$~\cite{zhou2022towards} & $16.06$ & $22.95$ & 0.0912 & -\\ 
        & & VDL (Ours) & $\textbf{23.66}$ & $\textbf{17.37}$ & \textbf{0.6237} & \textbf{0.7105} \\ 
         \bottomrule
    \end{tabular}
    }
    \centering
    \label{tab:stylegan2_on_coco}
\end{table*}

\subsubsection{Relational Representation Transfer}
\label{subsub:weak_supervision}
In addition, we encourage the approximate text samples $\bzth$ to mimic the correlation of the observed image embedding samples $\bzi$ in order to mitigate the challenge posed by the lack of supervision. 
The intuition behind this strategy is that the structural relation of text embeddings will resemble that of image representations. 
For example, the two text embeddings should be located close if the image representations are similar, and vice versa.
To impose the constraint on the text embedding samples, we employ a relational knowledge distillation framework~\cite{park2019relational}, which makes a student mimic the relations among data embeddings given by a teacher.
In our framework, we view image embeddings as teacher samples while text embeddings are regarded as student ones.
Therefore, we additionally minimize the relational distillation loss, which is given by
\begin{equation}
\hspace{-0.2cm} \mathcal{L}_\text{rkd} := \mathbb{E}[ \ell_\delta(\psi_A(\bzii, \bzij, \bzik) \hspace{-0.05cm} - \hspace{-0.05cm} \psi_A(\bzthi, \bzthj, \bzthk)) ], 
\label{eq:rkd_angle_loss}
\end{equation}
where the expectation is taken over any triplet image embeddings ($\bzii, \bzij, \bzik$) and the corresponding samples drawn by the variational distribution over ($\bzthi, \bzthj, \bzthk$).
In the above equation, $\ell_\delta(\cdot)$ and $\psi_A(\cdot, \cdot, \cdot)$ are defined as  
\begin{align}
\ell_\delta(a) & :=
\begin{cases}
    \frac{1}{2}a^2,& \text{for } \| a \| \leq \delta, \\
    \delta \cdot (\| a \| - \frac{1}{2} \delta) ,              & \text{otherwise},
\end{cases}
\\
\psi_A(\bz^i, \bz^j, \bz^k) &:= \text{sim}(\bz^i- \bz^j, \bz^i-\bz^k),
\end{align}
where sim$(\cdot, \cdot)$ denotes the cosine similarity between two vectors.
We set the hyperparameter $\delta$ of the Huber loss $\ell_\delta (\cdot)$ to 1 instead of searching for it.

\subsubsection{Total Objective}
\label{subsub:final_objective}
In summary, the final objective function of the first stage is given by
\begin{align}
&\min_{\theta^\bzi, \phi^\bzt}  \max_{\phi^D} \mathcal{L}_{\text{adv}} + \mathcal{L}_{\text{recon}} + \lambda_\text{rkd} \mathcal{L}_{\text{rkd}},
\end{align}
where $\lambda_\text{rkd}$ is a hyperparameter and $\mathcal{L}_\text{adv}$ denotes the first two terms of~\eqref{eq:stage1_objective_adv} constituting the adversarial loss.
Figure~\ref{fig:framework_a} illustrates the optimization procedure of the first stage.
In the case of the semi-supervised setting, we additionally employ the reconstruction loss for labeled examples, $\mathcal{L}_\text{semi}$, which enforces the variational text samples to mimic the true ones.
The reconstruction loss is formally given by  
\begin{equation}
\mathcal{L}_\text{semi} :=   \mathbb{E}_{q_{\phi^{\bzt}}( \bzth | \bzi)} \left[ \| \bzth -\bzt \|_1  \right],
\label{eq:semi}
\end{equation}
where $\| \cdot \|_1$ indicates the $\ell_1$-norm.

%
%
%
\section{Text-to-Image Generative Models}
\label{sub:training_t2i}
After training the first stage of VDL, we leverage the approximate posterior distribution of text features,  $q_{\phi^{\bzt}}( \bzt | \bzi)$, for the unsupervised training of T2I models.
For a T2I generation model with trainable parameters $\theta^{\bxi}$, we maximize \eqref{eq:vi_detail} with respect to $\theta^{\bxi}$, which is equivalent to solve the following problem:
\begin{align}
\max_{\theta^{\bxi}} \mathbb{E}_{q_{\phi^{\bzt}}( \bzt | \bzi)} [ \log p_{\theta^{\bxi}}(\bxi | \bzi, \bzt)], 
\end{align}
where $p_{\theta^{\bxi}}(\bxi | \bzi, \bzt)$ is a T2I generative model. 
Figure~\ref{fig:framework_b} and \ref{fig:framework_c} illustrate the inference procedures for training and testing, respectively.

Contrary to the existing methods such as CLIP-GEN~\cite{wang2022clip} and LAFITE~\cite{zhou2022towards}, which rely only on $\bzt$ for conditional image generation, our approach utilizes $\bzi$ as well as $\bzt$ for conditioning, resulting in better generalization performance thanks to the additional information for the image.
In our framework, $\bzih$ is reconstructed by the learned model, $p_{\theta^{\bzi}}(\bzi | \bzt)$, during inference, and then an image $\bxi$ is finally obtained from the optimized generator, $p_{\theta^{\bxi}}(\bxi | \bzih, \bzt)$.

For a T2I model, we employ StyleGAN2~\cite{karras2020analyzing} while Latent Diffusion Model (LDM)~\cite{LDM} is also adopted, where the experimental results of LDM are provided in the supplementary material. 
When the StyleGAN2 synthesizes images conditioned on text, we replace each style vector $\mathbf{s}^u$ with its conditional counterpart $\mathbf{s}^c$ following~\cite{zhou2022towards}, which is formally given by
\begin{equation}
\mathbf{s}^c = h([\mathbf{s}^u ; g([\bzt;\bzi])]),
\end{equation}
where $h(\cdot)$ denotes an affine transform, $g(\cdot)$ is a neural network with two fully-connected layers, and $[\cdot;\cdot]$ is the concatenation operator.

%

\section{Experiments}
\label{sec:experiments}
This section compares the proposed method referred to as VDL with existing approaches on the standard datasets under unsupervised and semi-supervised text-to-image generation settings, and analyzes the proposed components.
\begin{table*}[t!]
\caption{Results of semi-supervised text-to-image generation on the MS-COCO dataset using StyleGAN2.
    The `Ratio' column shows the fractions of the labeled text captions in each dataset. 
    }
\scalebox{0.90}{
    \begin{tabular}{cccccccc}
        \toprule
        Model & Dataset & Method & Ratio & IS ($\uparrow$) & FID ($\downarrow$) & $\text{Sim}_\text{txt}$ ($\uparrow$) & $\text{Sim}_\text{img}$ ($\uparrow$) \\
        \midrule
        \multirow{7}{*}{StyleGAN2~\cite{karras2020analyzing}} & \multirow{7}{*}{MS-COCO~\cite{lin2014microsoft}} & LAFITE~\cite{zhou2022towards}  & $0.0$ & $27.20$ & $18.04$ & 0.0965 & -- \\
        & & LAFITE*~\cite{zhou2022towards}  & $0.1$ & $18.82$ & $20.65$ & 0.8340 & -- \\
        & & LAFITE*~\cite{zhou2022towards}  & $0.2$ & $21.19$ & $17.74$ & 0.8373 & -- \\
      & & LAFITE*~\cite{zhou2022towards}  & $0.3$ & $21.39$ & $15.76$ & \textbf{0.8385} & -- \\
      & & VDL (Ours) & $0.0$ & $30.30$ & $13.22$ & 0.6237 & 0.7105 \\
      &  & VDL (Ours) & $0.1$ & $32.81$ & $\textbf{11.24}$ & 0.7130 & 0.7536\\
      &  & VDL (Ours) & $0.2$ & $\textbf{33.90}$ & $\textbf{11.24}$ & 0.7261 & \textbf{0.7596}\\
        \bottomrule
    \end{tabular}
    }
    \centering
    \label{tab:Semi}
\end{table*}

\subsection{Datasets}
\label{subsec:datasets}
We employ MS-COCO~\cite{lin2014microsoft} and Conceptual Captions 3M~\cite{cc3m} (CC3M) datasets, which are widely used for the evaluation of text-to-image generation tasks. 
As training and validation datasets, the MS-COCO dataset contains 82k and 40k images while the CC3M dataset consists of 3.3M and 16k examples, respectively.   
As a preprocessing, we resize all images to 256$\times$256 pixels for training a T2I network while the images are resized to 224$\times$224 pixels before feeding them into the CLIP image encoder.   
As text corpora, we use text captions in CC3M for MS-COCO and 3 million randomly sampled texts from Conceptual 12M~\cite{cc12m} for CC3M.
Note that we do not utilize the image-text pairs for the proposed algorithm like other unsupervised methods such as CLIP-GEN~\cite{wang2022clip} and LAFITE~\cite{zhou2022towards}.

\subsection{Evaluation Metrics}
\label{subsec:evaluation_metrics}
We select the Fr\'{e}chet Inception Distance (FID)~\cite{heusel2017gans} and Inception Score (IS)~\cite{salimans2016improved} to evaluate and compare the visual quality of generated images. 
We measure the two metrics following the experimental protocol of previous works~\cite{zhu2019dm, dalle, zhou2022towards} for fair comparisons.
Additionally, we report $\text{Sim}_\text{txt}$ and $\text{Sim}_\text{img}$ using the validation dataset, which are calculated by the expected cosine similarity between the true and predicted text features and the similarity between the image and its inferred embeddings, respectively.
Note that $\text{Sim}_\text{img}$ cannot be measured for the baseline algorithms since they do not consider reconstructing the image feature. 
The discrepancy between the training and inference becomes lower when each of the similarities is higher.

\subsection{Implementation Details}
\label{subsec:implementation_details}
The proposed method is implemented with the official code of LAFITE\footnote{https://github.com/drboog/Lafite} based on PyTorch~\cite{paszke2019pytorch}. 
For the encoder $G(\cdot)$, decoder $F(\cdot)$, and discriminator $D(\cdot)$, we adopt networks consisting of 10 fully-connected layers with the leaky ReLU activations, where each hidden layer has 2048 units.
For the discriminator, we add an $R_1$ regularization~\cite{mescheder2018training} for training stability, which suppresses the magnitudes of gradients. 
We use the Adam optimizer~\cite{diederik2015adam} for the three networks with an initial learning rate of $0.001$ with a batch size of 512.

During the second training stage, we train a T2I model using StyleGAN2~\cite{karras2020analyzing} to follow the experimental protocol used in LAFITE~\cite{zhou2022towards} for fair comparisons.
Specifically, we optimize StyleGAN2 using the Adam optimizer, and set a batch size to 64 and an initial learning rate to $2.5 \times 10^{-3}$. 
Also, $R_1$ regularization is also performed for the discriminator every 16 iterations to save training time.

\subsection{Unsupervised Setting Results}
\label{subsec:unsupervised}
Table~\ref{tab:stylegan2_on_coco}  shows the text-to-image generation results on the MS-COCO~\cite{lin2014microsoft} and CC3M~\cite{cc3m} datasets under the unsupervised setting, where the image captions are not available during training.
As demonstrated in the table, VDL achieves the best performance in terms of $\text{Sim}_\text{txt}$, FID, and IS by large margins both on the two datasets.
Although the noise injection for text prediction in LAFITE~\cite{zhou2022towards} degrades $\text{Sim}_\text{txt}$, LAFITE achieves better T2I performance in terms of FID and IS than CLIP-GEN. 
This is partly because the regions corresponding to the noisy text features predicted by LAFITE are larger than the deterministic point features given by CLIP-GEN and LAFITE consequently has more chance to identify accurate text features within the region.

\subsection{Semi-Supervised Setting Results}
\label{subsec:semi-supervised}
In Table~\ref{tab:Semi}, we present the performance of LAFITE~\cite{zhou2022towards} and VDL in semi-supervised learning scenarios on the MS-COCO dataset~\cite{lin2014microsoft}, where only a fraction of image and text pairs are accessible; the ratio of $0.0$ indicates the unsupervised setting.
According to the table, the proposed method outperforms LAFITE by large margins in terms of FID and IS.
Also, VDL trained even in the unsupervised setting outperforms LAFITE with the ratio of $0.3$, which implies that the proposed method is more annotation-efficient.
As more text captions become available, VDL obtains higher IS, $\text{Sim}_\text{txt}$, and $\text{Sim}_\text{img}$ while FID is unfortunately saturated at an early stage.
In terms of $\text{Sim}_\text{txt}$ under semi-supervised settings, LAFITE outperforms VDL. 
That this is partly because LAFITE trains a learnable network on the paired data by making the predicted text embeddings close to the true ones.
However, the strategy turns out to be ineffective for improving FID and even degrades the IS score.
\begin{figure*}[t]
\centering
\includegraphics[width=\linewidth]{./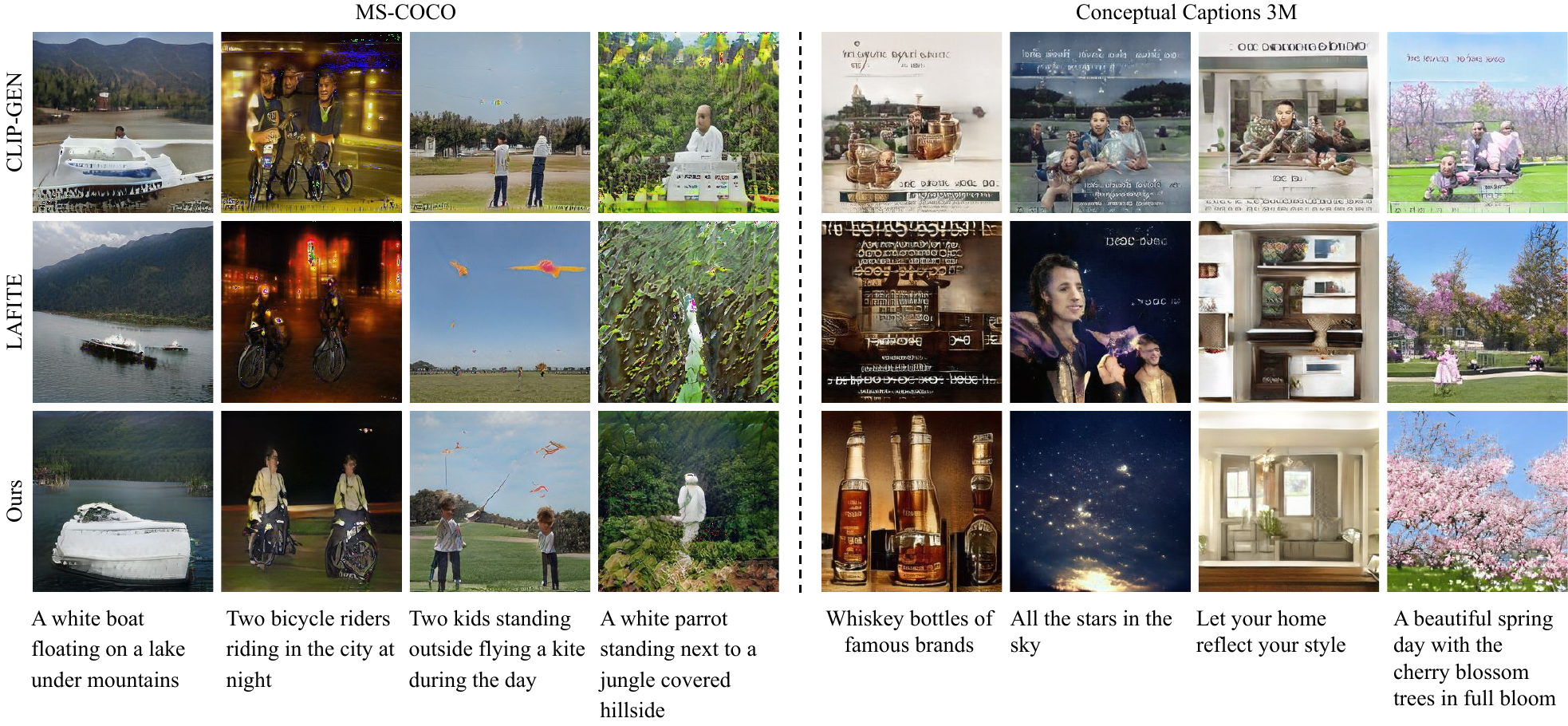}
\caption{Qualitative results on the MS-COCO and Conceptual Captions 3M datasets using StyleGAN2. VDL generates visually higher-quality images than LAFITE and CLIP-GEN.}
\label{fig:coco_stylegan2_qual}
\end{figure*}
\begin{table}[t!]
    \caption{Comparison between the KL divergence with the JS divergence under the unsupervised setting on the MS-COCO dataset using StyleGAN2. 
    KLD optimizes~\eqref{eq:stage1_objective} with the assumption that the variational distribution follows the Gaussian distribution with Gaussian mixture models for the prior.
    DualKLD optimizes the dual representation of the KL objective in~\cite{donsker1983asymptotic}.
    }
\scalebox{0.90}{
    \begin{tabular}{lccccc}
        \toprule
         Method & IS ($\uparrow$) & FID ($\downarrow$) & $\text{Sim}_\text{txt}$ ($\uparrow$) & $\text{Sim}_\text{img}$ ($\uparrow$) \\
        \midrule
        KLD & $20.89$ & $34.42$ & 0.1396 & 0.2833 \\
        DualKLD & $27.83$ & $15.60$ & \textbf{0.6236} & 0.7440\\
        JSD (VDL) & $\textbf{30.30}$ & $\textbf{13.22}$ & 0.6104 & \textbf{0.7655} \\ 
        \bottomrule
    \end{tabular}
    }
    \centering
    \label{tab:KLvsJSD}
\end{table}
\subsection{Analysis}
\label{subsec:ablation}

\subsubsection{Jensen-Shannon Divergence}
\label{subsubsec:JSD}
We study the effect of using the JS divergence instead of KL divergence under the unsupervised setting with the StyleGAN2 architecture on MS-COCO. 
As reported in Table~\ref{tab:KLvsJSD}, our strategy using the JS divergence is more effective than employing the KL divergence and its variation. 
Specifically, we compute the objective in~\eqref{eq:stage1_objective} using two different ways. 
First, motivated by variational autoencoders~\cite{kingmaauto}, we model the variational and prior distributions with a Gaussian distribution and its mixture, respectively, and this approach is referred to as KLD.
The other method denoted by DualKLD replaces the KL divergence with its dual form, the Donsker-Varadhan representation~\cite{donsker1983asymptotic}, which also performs the minimax optimization.
In the case of KLD, the performance significantly degrades because the Gaussian assumptions for the variational and prior distributions are not effective; the Gaussian distribution has a non-zero density outside the unit-hypersphere, where CLIP text features are not located.
On the other hand, DualKLD, which is free from the distribution restriction, outperforms KLD although it is still worse than our approach.
However, we observe that DualKLD is sensitive to the hyperparameter partly due to its unbounded property contrary to the JS divergence.

\begin{table}[t!]
\caption{Ablation study results on MS-COCO with StyleGAN2 under an unsupervised setting. 
     VDL w/o $S_\text{VDL}$ directly predicts a text feature and then normalizes it to locate at the unit-hypersphere without using $S_\text{VDL}$ while VDL w/o $\mathcal{L}_\text{rkd}$ does not employ $\mathcal{L}_\text{rkd}$. 
    }
\scalebox{0.90}{
    \begin{tabular}{lccccc}
        \toprule
         Method & IS ($\uparrow$) & FID ($\downarrow$) & $\text{Sim}_\text{txt}$ ($\uparrow$) & $\text{Sim}_\text{img}$ ($\uparrow$) \\
        \midrule
        VDL w/o $S_\text{VDL}$ & $19.86$ & $36.37$ & 0.5498 & 0.5256 \\
        VDL w/o $\mathcal{L}_\text{rkd}$ & $28.45$ & $15.75$ & \textbf{0.6128} & \textbf{0.7637} \\
        VDL (Ours) & $\textbf{30.30}$ & $\textbf{13.22}$ & \textbf{0.6104} & \textbf{0.7655} \\ 
        \bottomrule
    \end{tabular}
    }
    \centering
    \label{tab:ablation}
\end{table}

\subsubsection{Component analysis}
We analyze the contributions of the individual components in our approach.
As presented in Table~\ref{tab:ablation}, $S_\text{VDL}$ is helpful for improving $\text{Sim}_\text{txt}$, $\text{Sim}_\text{img}$, and T2I performance in unsupervised settings.
Although $\mathcal{L}_\text{rkd}$ is conceptually irrelevant to reduce the discrepancy between the true and predicted text features, it improves generation performance by learning relational embeddings between two modalities.

\subsubsection{Qualitative Results}
Figure~\ref{fig:coco_stylegan2_qual} visualizes generation results on the MS-COCO and CC3M datasets using CLIP-GEN~\cite{wang2022clip}, LAFITE~\cite{zhou2022towards}, and VDL. 
As illustrated in the figure, the proposed method successfully generates images based on given sentences with enhanced visual quality while the others sometimes fail to understand the overall meaning of text captions.

%


\section{Conclusion}
\label{sec:conclusion}
We presented an annotation-efficient method for text-to-image generation when image and text caption pairs are rarely available or text information is completely inaccessible.  
To address the challenge, we employ the off-the-shelf CLIP model to estimate hidden text features given observable images, where we rely on the CLIP's multi-modal joint embedding quality.
To further improve the quality of text embedding, we approximate its intractable true posterior probability by exploiting the variational inference technique.
Given the inferred features and their image embeddings, we learn a conditional generative model to reconstruct images. 
Experimental results verify that the proposed method achieves outstanding performance on the unsupervised and semi-supervised learning environments.

\paragraph{Acknowledgments}
This work was partly supported by the Institute of Information \& communications Technology Planning \& Evaluation (IITP) grants funded by the Korea government (MSIT) [No.2022-0-00959, (Part 2) Few-Shot Learning of Causal Inference in Vision and Language for Decision Making; No.2021-0-01343, Artificial Intelligence Graduate School Program (Seoul National University); No.2021-0-02068,AI Innovation Hub].

{\small
\bibliographystyle{ieee_fullname}
\bibliography{egbib}
}
\setcounter{proposition}{0}
\appendix
\section{Appendix}
\label{sec:appendix} 
This document first provides the proof of Proposition 1. 
Then, we present additional results on the LN-COCO~\cite{pont2020connecting} dataset under the unsupervised T2I generation task based on StyleGAN2~\cite{karras2020analyzing}. 
We also evaluate the performance of the T2I results using a diffusion-based text-to-image generative model~\cite{LDM} to validate the generality of the proposed approach.
Finally, we demonstrate additional qualitative results supplementing Figure 3 of the main paper, which visualizes the synthesized results on the MS-COCO~\cite{lin2014microsoft} and Conceptual Captions 3M~\cite{cc3m} datasets given by CLIP-GEN~\cite{wang2022clip}, LAFITE~\cite{zhou2022towards}, and VDL based on StyleGAN2 under the unsupervised setting.

\subsection{Proof of Proposition 1}
\label{sec:Proof}
\begin{proposition}
Let $\bzth$ be a sample obtained by the proposed sampling strategy $S_\text{VDL}$ defined in \eqref{eq:fake_txt} based on $\bzi$. 
Then, the following inequality always holds for $G(\cdot;\phi^\bzt)$ with arbitrary values of its parameter $\phi^\bzt$:
\begin{equation}
\tbzth  \bzi \geq \sqrt{1-r^2}, \nonumber
\end{equation}
where $r<1$.
\label{propostion:1}
\end{proposition}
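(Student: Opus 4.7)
\textbf{Proof plan for Proposition~\ref{propostion:1}.} The plan is to reduce the claimed inequality to an obvious nonnegativity statement by parameterizing everything in terms of a single scalar, namely the inner product of $\bzi$ with the direction of $G(\bzi)$. First I would use the fact that CLIP image embeddings live on the unit hypersphere, so $\|\bzi\|=1$; this is what makes the denominator in the $\text{Normalize}$ step manageable. Writing $\mathbf{v}:=G(\bzi)/\|G(\bzi)\|$ (also a unit vector by construction) and $c:=\mathbf{v}^{T}\bzi\in[-1,1]$, the sampling rule \eqref{eq:fake_txt} gives
\begin{equation*}
\bzth = \frac{\bzi + r\mathbf{v}}{\|\bzi + r\mathbf{v}\|},\qquad \|\bzi + r\mathbf{v}\|^{2}=1+2rc+r^{2},
\end{equation*}
so that the target quantity becomes $\tbzth\bzi = (1+rc)/\sqrt{1+2rc+r^{2}}$.

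Next I would reduce the inequality $\tbzth\bzi\geq\sqrt{1-r^{2}}$ to a polynomial inequality in $c$ and $r$. Before squaring I need to check that $1+rc\geq0$; this follows immediately because $r\in[0,1)$ and $c\geq-1$, giving $1+rc\geq 1-r>0$. Squaring both sides and clearing the (positive) denominator reduces the claim to
\begin{equation*}
(1+rc)^{2}\ \geq\ (1-r^{2})(1+2rc+r^{2}).
\end{equation*}
Expanding both sides and cancelling common terms should collapse everything to $r^{2}c^{2}+2r^{3}c+r^{4}\geq 0$, i.e.\ $r^{2}(c+r)^{2}\geq 0$, which is manifestly true. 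I would close the argument by noting that this chain of implications is reversible (because both sides of the squared inequality were nonnegative), so the original bound holds for every choice of $\phi^{\bzt}$.

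The computation is entirely routine; there is no genuine obstacle. The only two things to keep an eye on are (i) making sure to invoke the fact that $\bzi$ is unit-norm (without this the $\text{Normalize}$ collapse does not yield the clean denominator $1+2rc+r^{2}$), and (ii) justifying the squaring step by checking $1+rc>0$, which depends on the hypothesis $r<1$. The bound is tight in the regime $c=-r$, which is consistent with $r$ controlling how far $\bzth$ can drift from $\bzi$ on the sphere.
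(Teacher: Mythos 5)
Your proof is correct and follows essentially the same route as the paper: both reduce $\tbzth\bzi$ to $(1+rc)/\sqrt{1+2rc+r^2}$ using $\|\bzi\|=1$, and your squaring-and-expanding step (yielding $r^2(c+r)^2\geq 0$) is algebraically identical to the paper's application of the AM--GM inequality to $A=1+2rc+r^2$ and $B=1-r^2$. Your explicit check that $1+rc>0$ before squaring is a point of care the paper's AM--GM packaging sidesteps, but the substance is the same.
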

\begin{proof}
The inner product can be expressed as 
\begin{equation}
\tbzth  \bzi = \tbzi  \frac{\bzi + r \cdot \bgi}{\| \bzi + r \cdot \bgi \|},  
\label{eq:prop_LHS}
\end{equation}
where $\bgi$ is equivalent to $\text{Normalize}(G(\bzi))$.
In addition, the denominator in \eqref{eq:prop_LHS} is given by
\begin{align}
\| \bzi + r \cdot \bgi \| &= \sqrt{(\bzi + r \cdot \bgi)^T (\bzi + r \cdot \bgi)} \nonumber \\
&= \sqrt{1 + 2 r \cdot \tbzi \bgi + r^2}.
\label{eq:magnitude_variational_samples}
\end{align}
Based on the two equations, we have 
\begin{align}
\tbzth  \bzi &=  \frac{\tbzi (\bzi + r \cdot \bgi)}{ \sqrt{1 + 2 r \cdot \tbzi \bgi + r^2}} \nonumber \\
&=  \frac{1 + r \cdot \tbzi \bgi}{ \sqrt{1 + 2 r \cdot \tbzi \bgi + r^2}} \nonumber \\
&=\frac{(1 + 2 r \cdot \tbzi \bgi + r^2) + (1 - r^2)}{ 2\sqrt{1 + 2 r \cdot \tbzi \bgi + r^2}} \nonumber \\
&\geq \sqrt{1-r^2}, \nonumber
\end{align}
where the last inequality is derived by using the inequality of arithmetic and geometric means.
\end{proof}

\subsection{Experiments on LN-COCO}
\label{sec:Unsupervised}
We compare the proposed method with CLIP-GEN~\cite{wang2022clip} and LAFITE~\cite{zhou2022towards} on the LN-COCO~\cite{pont2020connecting} dataset using StyleGAN2 under the unsupervised setting. 
As presented in Table~\ref{tab:LN-COCO}, our method outperforms existing approaches by large margins.
Figure~\ref{fig:mscoco} depicts synthesized images given by VDL and CLIP-GEN~\cite{wang2022clip}, where the results of LAFITE are not provided since its pre-trained model is not publicly available. 

\subsection{Ablation on T2I models}
\label{sec:LDM}
We remark that our approach is model-agnostic; any type of conditional image generation network is applicable to our framework.
To validate the effectiveness of the proposed method without carefully designing the T2I network, we replace StyleGAN2~\cite{karras2020analyzing} with a diffusion model, Latent Diffusion Model (LDM)~\cite{LDM}, and perform experiments on the MS-COCO~\cite{lin2014microsoft} and Conceptual Captions 3M~\cite{cc3m} (CC3M) datasets under the unsupervised setting.

\subsubsection{Implementation Details}
For the second-stage training, we optimize LDM~\cite{LDM} for 150k and 300k iterations on the MS-COCO~\cite{lin2014microsoft} and Conceptual Captions 3M~\cite{cc3m} datasets using the Adam optimizer with a batch size of 64 and an initial learning rate of $6.4 \times 10^{-5}$.
We set the resolution of the latent space to 64, where a pretrained Vector Quantized GAN~\cite{VQGAN} is selected as a latent perceptual compression network without an extra fine-tuning.
Following the latent conditioning strategy used in~\cite{preechakul2022diffusion}, we inject CLIP features to the noisy predictions of the backbone network based on U-Net~\cite{ronneberger2015u} inside the LDM framework~\cite{LDM}.
Specifically, we replace the last group normalization layer in each residual block of the U-Net with an adaptive group normalization layer whose scale and shift parameters are computed by applying a single fully connected layer to the temporal positional embeddings.
For conditioning given sentences, the outputs of the normalization layer are further multiplied with the projected CLIP features using a single fully connected layer.

\subsubsection{Results unser Unsupervised Setting}
We present quantitative results in Table~\ref{tab:ldm} while the generated images on the MS-COCO~\cite{lin2014microsoft} and Conceptual Captions 3M~\cite{cc3m} datasets are provided in Figure~\ref{fig:mscoco_ldm} and \ref{fig:cc3m_ldm}, respectively.
These results show that VDL archives superior performance when combined with the diffusion based LDM~\cite{LDM} for the T2I model, and the proposed method is agnostic to the types of the T2I network.
%

\subsection{Additional Qualitative Results}
\label{sec:additional_qualitative}
Figure~\ref{fig:mscoco} and \ref{fig:cc3m} visualize additional qualitative results from the proposed approach compared to existing methods.
The results clearly show that VDL generates visually more faithful and realistic images considering the given text descriptions and the natural image distribution while the other two methods often fail to meet text conditions and/or generate natural images.
\begin{table*}[t!]
    \caption{Results of unsupervised text-to-image generation on the LN-COCO~\cite{pont2020connecting} dataset using StyleGAN2~\cite{karras2020analyzing}.
    Methods with asterisks (*) report the results of our reproduction. 
    A bold-faced number denotes the best performance in each column. 
    }
\scalebox{0.90}{
    \begin{tabular}{lccccccc}
        \toprule
         T2I Model & Dataset & Method & IS ($\uparrow$) & FID ($\downarrow$) & $\text{Sim}_\text{txt}$ ($\uparrow$) & $\text{Sim}_\text{img}$ ($\uparrow$) \\
        \midrule
       \multirow{3}{*}{StyleGAN2~\cite{karras2020analyzing} } & \multirow{3}{*}{LN-COCO~\cite{pont2020connecting}} &CLIP-GEN*~\cite{wang2022clip} & $12.12$ & $83.87$ & $0.2750$ & $-$ \\
       & &  LAFITE~\cite{zhou2022towards} & $18.49$ & $38.95$ & $0.0872$ & $-$ \\
       & &  VDL (Ours) & $\textbf{21.55}$ & $\textbf{31.33}$ & $\textbf{0.6118}$ & $\textbf{0.7025}$ \\ 
        \bottomrule
    \end{tabular}
    }
    \centering
    \label{tab:LN-COCO}
\end{table*}
\begin{figure*}[!t]
\centering
\includegraphics[width=\linewidth]{./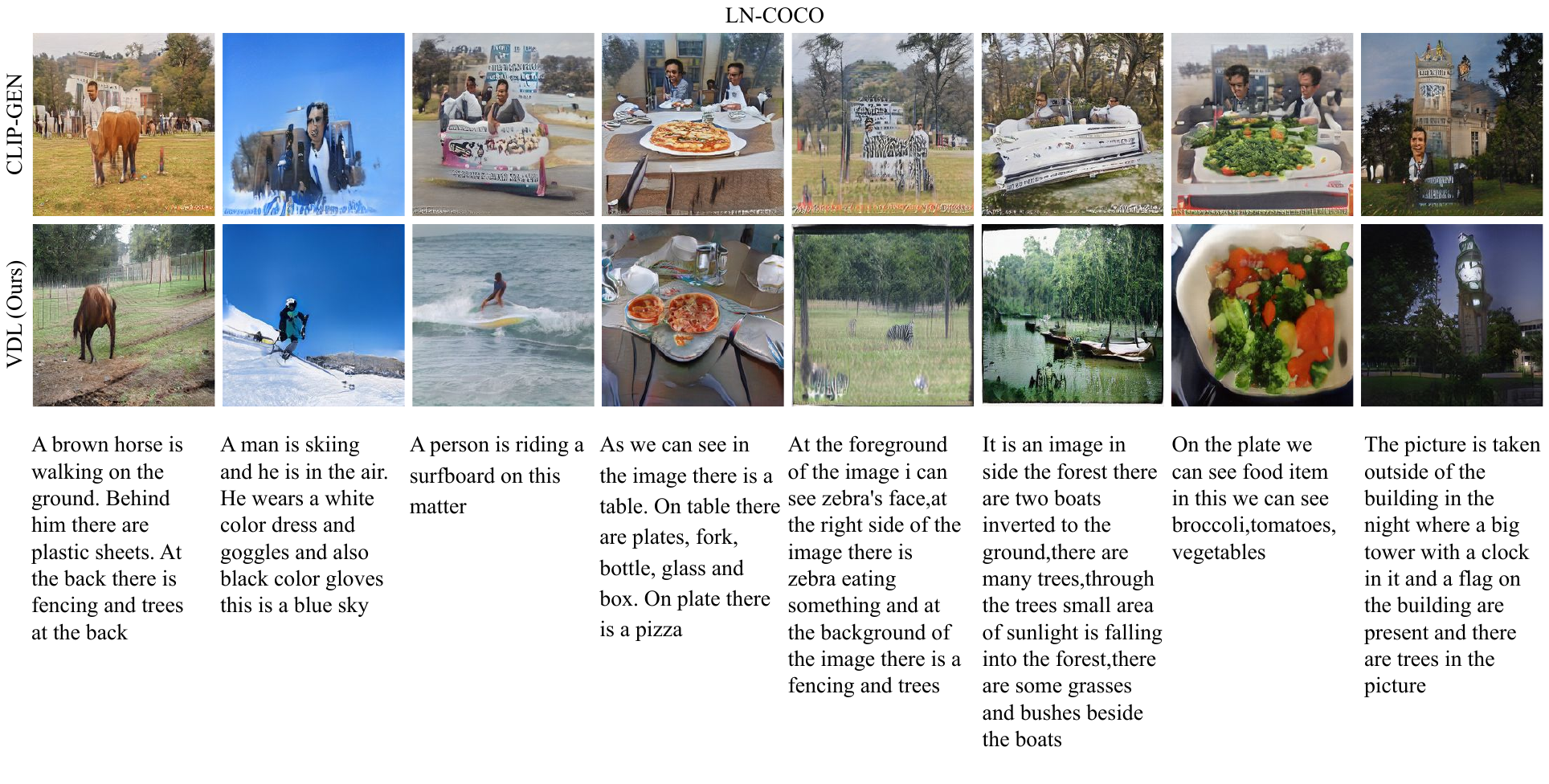}
\caption{Qualitative results on the LN-COCO dataset using StyleGAN2. VDL generates visually higher-quality images than CLIP-GEN.}
\label{fig:mscoco}
\end{figure*}
\begin{table*}[t!]
\caption{Results of unsupervised text-to-image generation on the MS-COCO~\cite{lin2014microsoft} and Conceptual Captions 3M~\cite{cc3m} datasets using LDM~\cite{LDM}. 
    }    
\scalebox{0.90}{
    \begin{tabular}{ccccccc}
        \toprule
      T2I Model & Dataset & Method & IS ($\uparrow$) & FID ($\downarrow$) & $\text{Sim}_\text{txt}$ ($\uparrow$) & $\text{Sim}_\text{img}$ ($\uparrow$)  \\ 
        \midrule
        \multirow{6}{*}{LDM~\cite{LDM}}  & \multirow{3}{*}{MS-COCO~\cite{lin2014microsoft}}  & CLIP-GEN*~\cite{wang2022clip} & $12.96$ & $48.14$ & 0.3042 & -\\ 
        & & LAFITE*~\cite{zhou2022towards} & $16.53$ & $23.92$ & 0.0965 &- \\ 
        & & VDL (Ours) & $\textbf{23.25}$ & $\textbf{13.68}$ & \textbf{0.6104} & \textbf{0.7655} \\ 
        \cmidrule{2-7}
        & \multirow{3}{*}{Conceptual Captions 3M~\cite{cc3m}} &  CLIP-GEN*~\cite{wang2022clip} & $10.08$ & $47.53$ & 0.2896 &-\\ 
        & & LAFITE*~\cite{zhou2022towards} & $10.98$ & $33.98$ & 0.0912 & -\\ 
        & & VDL (Ours) & $\textbf{15.09}$ & $\textbf{23.03}$ & \textbf{0.6237} & \textbf{0.7105} \\ 
         \bottomrule
    \end{tabular}
    }
    \centering
    \label{tab:ldm}
\end{table*}
\begin{figure*}[!t]
\centering
\includegraphics[width=\linewidth]{./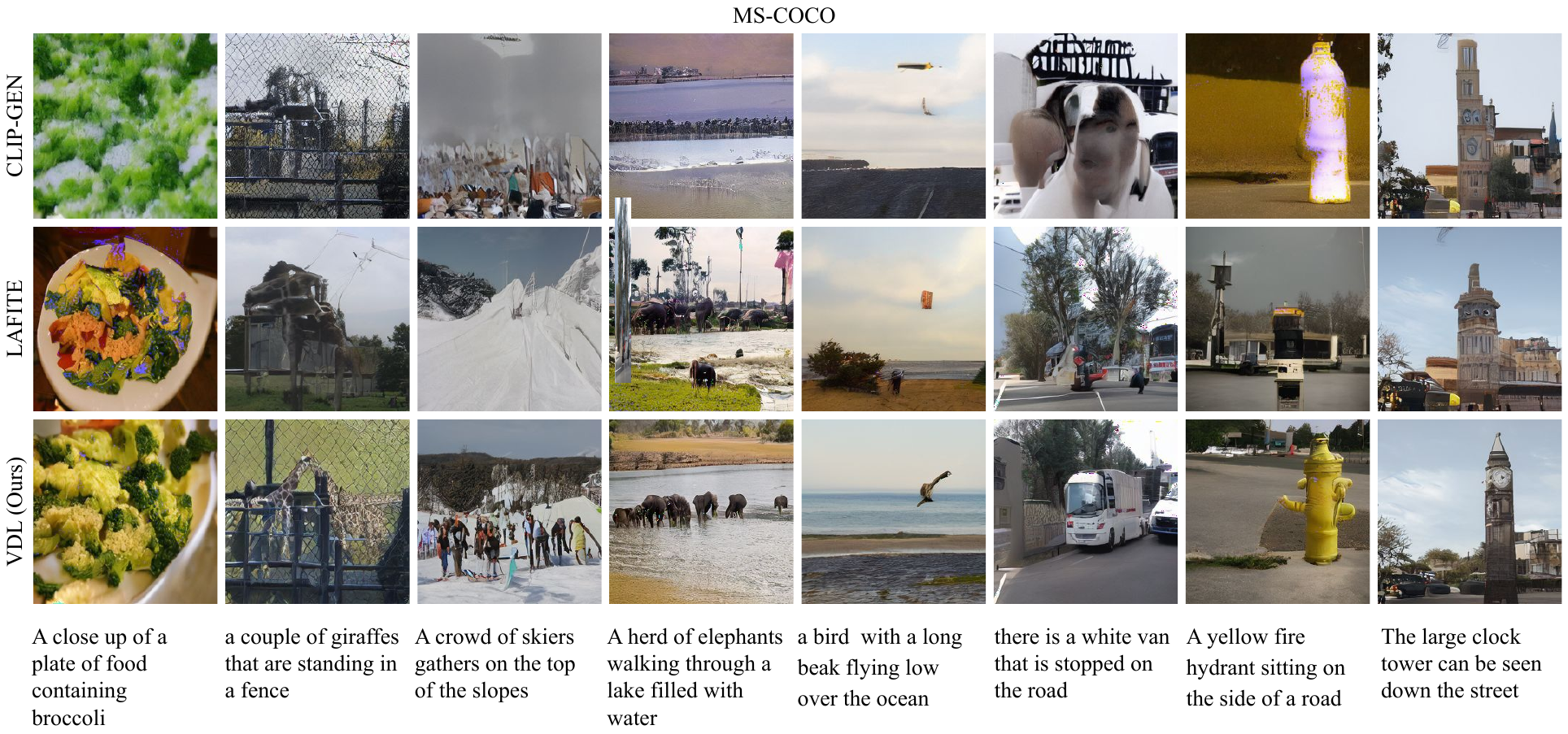}
\caption{Qualitative results on the MS-COCO dataset using LDM. VDL generates visually higher-quality images than LAFITE and CLIP-GEN.}
\label{fig:mscoco_ldm}
\end{figure*}
\begin{figure*}[!t]
\centering
\includegraphics[width=\linewidth]{./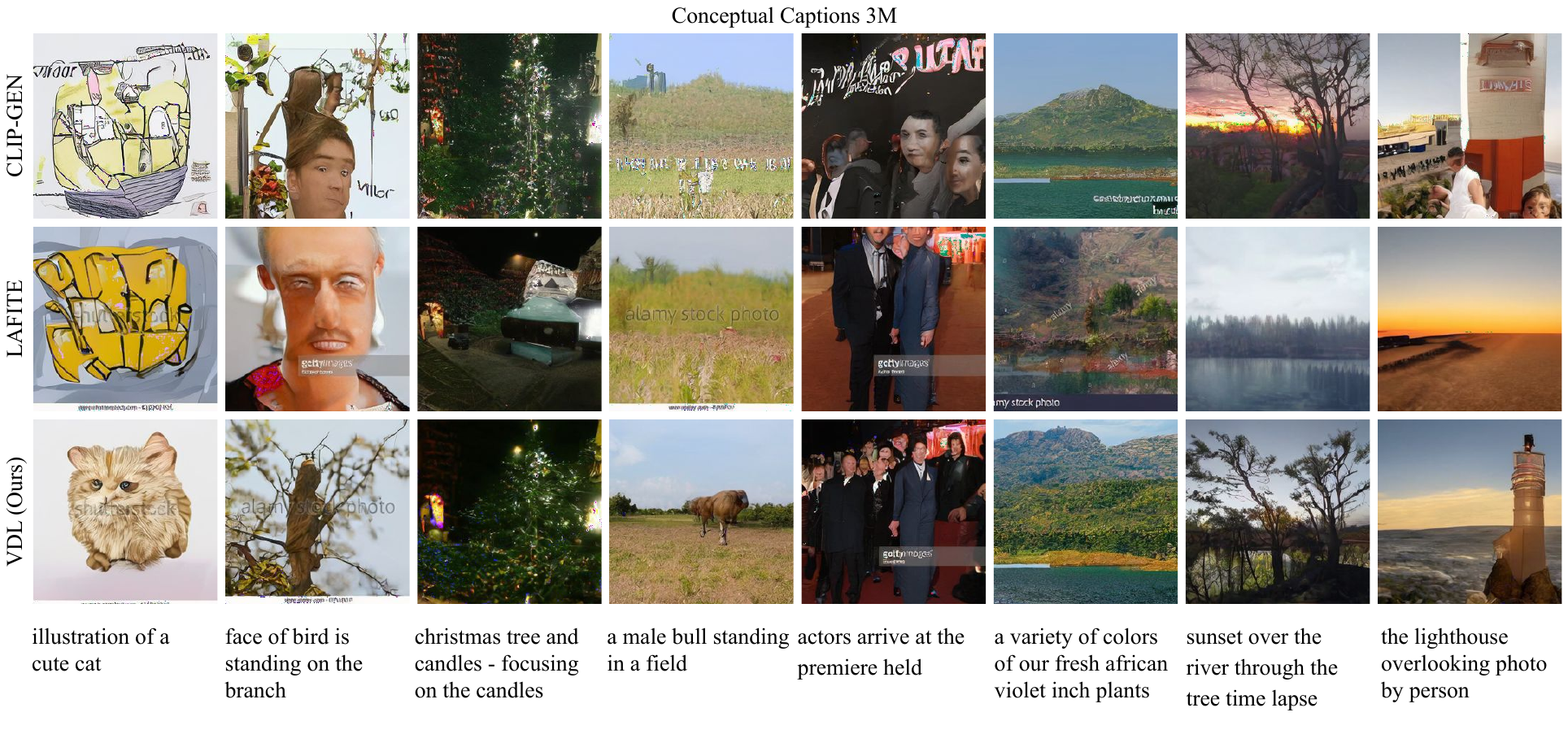}
\caption{Qualitative results on the Conceptual Captions 3M dataset using LDM. VDL generates visually higher-quality images than LAFITE and CLIP-GEN.}
\label{fig:cc3m_ldm}
\end{figure*}
\begin{figure*}[!t]
\centering
\includegraphics[width=\linewidth]{./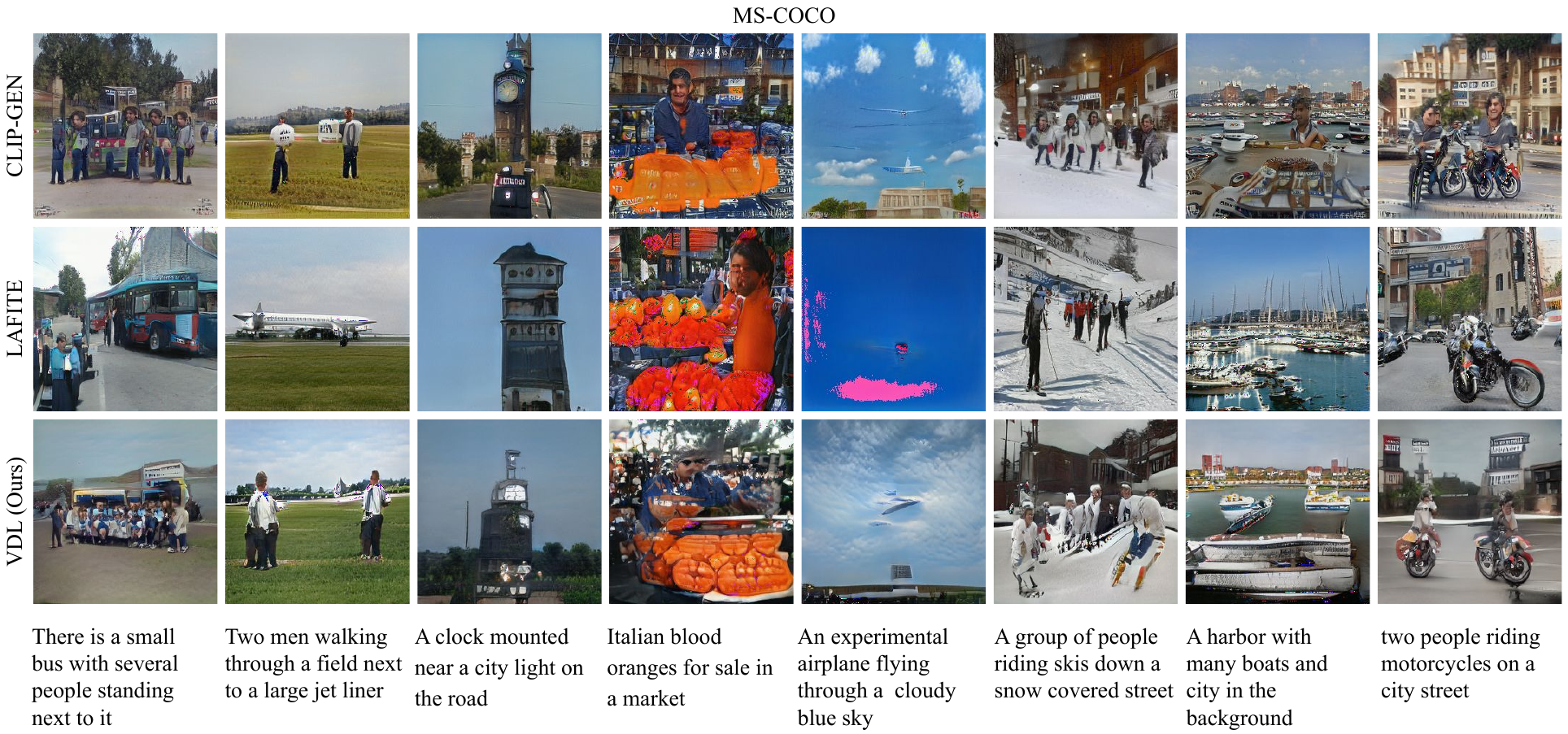}
\caption{Additional qualitative results on the MS-COCO dataset using StyleGAN2. VDL generates visually higher-quality images than LAFITE and CLIP-GEN.}
\label{fig:mscoco}
\end{figure*}
\begin{figure*}[!t]
\centering
\includegraphics[width=\linewidth]{./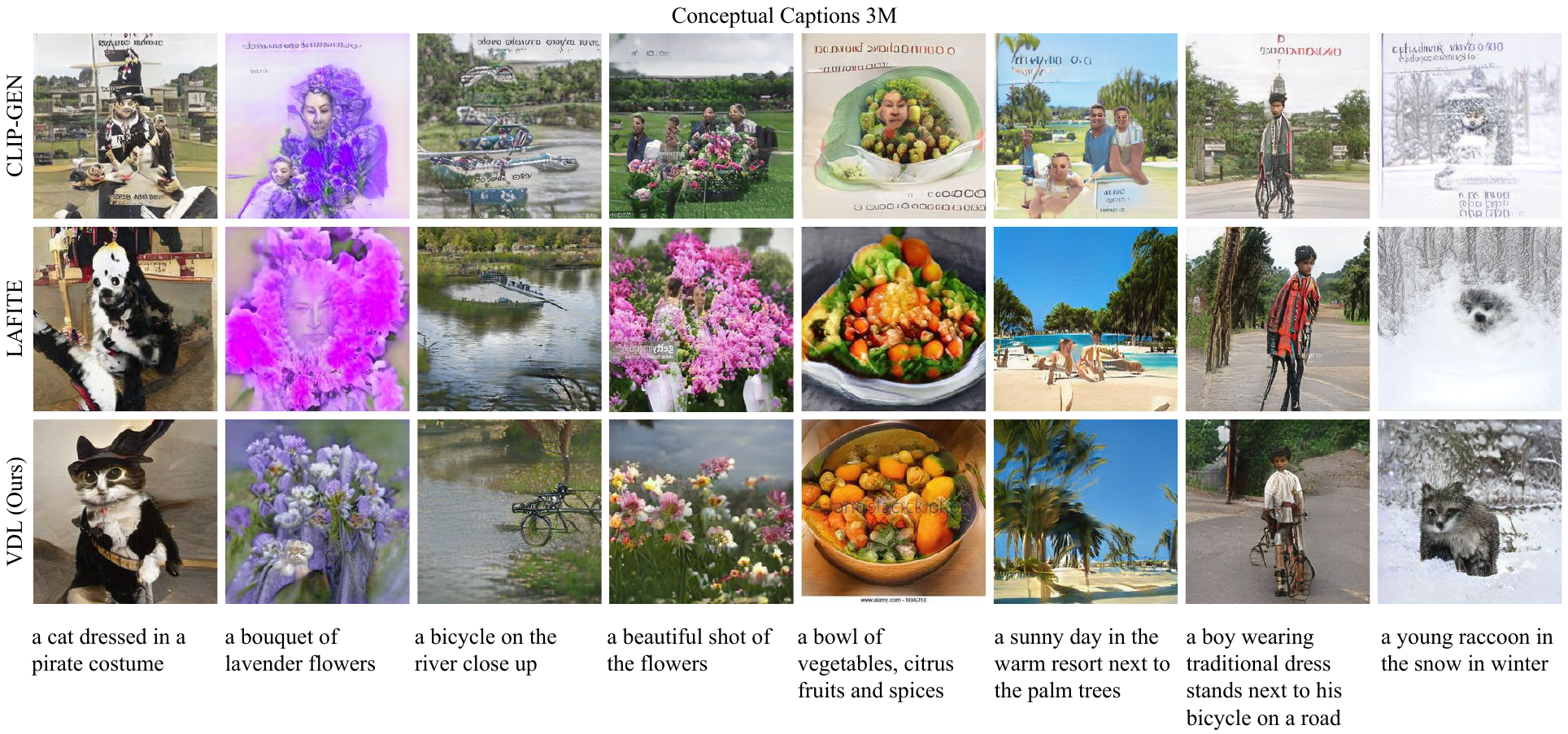}
\caption{Additional qualitative results on the Conceptual Captions 3M dataset using StyleGAN2. VDL generates visually higher-quality images than LAFITE and CLIP-GEN.}
\label{fig:cc3m}
\end{figure*}

\end{document}